\newcommand{\norm}[1]{\left\lVert#1\right\rVert}
\newcommand{\idot}[1]{\langle#1\rangle}
\def\delequal{\mathrel{\ensurestackMath{\stackon[1pt]{=}{\scriptstyle\Delta}}}}
\DeclareMathOperator{\tr}{tr}
\def\eqref#1{equation~\ref{#1}}
\def\1{\bm{1}}
\def\vg{{\bm{g}}}
\def\vs{{\bm{s}}}
\def\vv{{\bm{v}}}
\def\vx{{\bm{x}}}
\def\vy{{\bm{y}}}
\def\vz{{\bm{z}}}
\def\mH{{\bm{H}}}
\def\mI{{\bm{I}}}
\def\mS{{\bm{S}}}
\DeclareMathAlphabet{\mathsfit}{\encodingdefault}{\sfdefault}{m}{sl}
\SetMathAlphabet{\mathsfit}{bold}{\encodingdefault}{\sfdefault}{bx}{n}
\def\sR{{\mathbb{R}}}
\newcommand{\nn}{\nonumber}
\newcommand{\E}{\mathbb{E}}
\newtheorem{definition}{\textbf{Definition}}
\newtheorem{lemma}{\textbf{Lemma}}
\newtheorem{theorem}{\textbf{Theorem}}
\newtheorem{remark}{\textbf{Remark}}
\newtheorem{example}{\textbf{Example}}
\DeclareMathOperator*{\argmin}{arg\,min}
\title{Gradient Perturbation is Underrated for \\Differentially Private Convex Optimization}
\author{
Da Yu$^1$\thanks{Equal contribution.}\footnote{The work was done when this author was an intern at Microsoft Research Asia.} \and
Huishuai Zhang$^2$\footnotemark[1]\and
Wei Chen$^{2}$\and
Jian Yin$^1$ \And
Tie-Yan Liu$^2$\\
\affiliations
$^1$The School of Data and Computer Science, Sun Yat-sen University.\\Guangdong Key Laboratory of Big Data Analysis and Processing,  Guangzhou 510006, P.R.China\\
$^2$Microsoft Research Asia, Beijing, China\\
\emails
\{yuda3@mail2, issjyin@mail\}.sysu.edu.cn,
\{huishuai.zhang, wche, tie-yan.liu\}@microsoft.com
}
\begin{document}
\maketitle

\begin{abstract}

Gradient perturbation, widely used for differentially private optimization, injects noise at every iterative update to guarantee differential privacy. Previous work first determines the noise level that  can satisfy the privacy requirement and then analyzes the utility  of noisy gradient updates as in the non-private case.  In contrast, we explore how the privacy noise affects the optimization property. We show that for differentially private convex optimization, the utility guarantee of differentially private (stochastic) gradient descent is determined by an \emph{expected curvature} rather than the minimum curvature. The \emph{expected curvature}, which represents the average curvature over the optimization path, is usually much larger than the minimum curvature. By using the \emph{expected curvature}, we show that gradient perturbation can achieve a significantly improved utility guarantee that can theoretically justify the advantage of gradient perturbation over other perturbation methods. Finally, our extensive experiments suggest that  gradient perturbation with the advanced composition method indeed outperforms other perturbation approaches by a large margin, matching our theoretical findings.
\end{abstract}

\section{Introduction}
Machine learning has become a powerful tool for many applications.  The training process often needs access to private datasets, e.g., applications in financial and medical fields. Recent work has shown that the model learned from training data may leak unintended  information of individual records  \cite{shokri2017membership,hitaj2017deep}. It is known that \emph{Differential Privacy (DP)}  \cite{dwork2006our} is a golden standard for privacy preserving data analysis, which has also been deployed into real world applications \cite{erlingsson2014rappor,abowd2016challenge}. It provides provable privacy guarantee by ensuring that the influence of any individual record is negligible.

We study the fundamental problem when differential privacy meets machine learning: the \emph{differentially private  empirical risk minimization (DP-ERM)} problem  \cite{chaudhuri2011differentially,bassily2014differentially,talwar2015nearly,wu2017bolt,zhang2017efficient,wang2017differentially,jayaraman2018distributed,feldman2018privacy,iyengar2019towards,wang2019differentially}. DP-ERM minimizes the empirical risk while guaranteeing the output of the learning algorithm differentially private with respect to the training data.  Such privacy guarantee provides strong protection against potential adversaries  \cite{hitaj2017deep,rahman2018membership}. In order to guarantee privacy, it is necessary to introduce randomness to the learning algorithm. There are usually three ways to introduce randomness  according to the time of adding noise: \emph{output perturbation, objective perturbation} and \emph{gradient perturbation}.

\emph{Output perturbation} \cite{wu2017bolt,zhang2017efficient} first runs the learning algorithm the same as in the non-private case then adds noise to the output parameter. \emph{Objective perturbation} \cite{chaudhuri2011differentially,iyengar2019towards} perturbs the objective (i.e., the empirical loss)  then releases the (approximated) minimizer of the perturbed objective. \emph{Gradient perturbation} \cite{bassily2014differentially,abadi2016deep,jayaraman2018distributed}  perturbs  each intermediate update. 

Gradient perturbation comes with several advantages over output/objective perturbations. Firstly, gradient perturbation does not require strong assumption on the objective because it  only needs to bound the sensitivity of each gradient update rather than the whole learning process.  Secondly, gradient perturbation can release the noisy  gradient at each iteration without damaging the privacy guarantee as differential privacy is immune to \emph{post processing}  \cite{algofound}. Therefore, it is a favorable choice in the distributed optimization or the federated learning setting  \cite{konevcny2015federated,agarwal2018cpsgd,jayaraman2018distributed}. At last, based on our own experiments and the observation in \cite{lee2018concentrated}, gradient perturbation achieves much better empirical utility than output/objective perturbations for DP-ERM. 

However, the existing theoretical utility guarantee for gradient perturbation is the same as or strictly inferior to that of output/objective perturbation approaches as shown in Table~\ref{tb:utility}. This motivates us to ask

Can we improve the theory for gradient perturbation that can reflect its empirical advantage?

In the analysis for gradient perturbation approach, all previous works  \cite{bassily2014differentially,wang2017differentially,jayaraman2018distributed} derive the utility guarantee via two steps. They first determine the noise variance that meets the privacy requirement and then derive the utility guarantee by using the convergence analysis the same as in the non-private case.  However, the noise to guarantee privacy  naturally affects the optimization procedure, which was neglected in previous analysis. 

In this paper, we utilize the interaction between the privacy noise and the optimization procedure, which help us establish new utility guarantees for gradient perturbation approaches. Our contributions are  as follows.

\begin{itemize}
\item We introduce an \emph{expected curvature} that characterizes the optimization property accurately when there is perturbation noise at each gradient update.

\item We establish new utility guarantees for DP-GD and DP-SGD for both convex and strongly convex objectives based on the \emph{expected curvature} rather than the usual minimum curvature, with significant improvement over previous work.

\item We conduct extensive experiments on real world datasets. Our experiments suggest that gradient perturbation with the advanced composition theorem outperforms other perturbation approaches, which corroborates our theoretical findings nicely.

\end{itemize}

  To the best of our knowledge, this is the first utility guarantee of DP-ERM that removes the dependency on the minimum curvature. Moreover, we argue that objective/output perturbation cannot utilize this expected curvature condition to improve their utility guarantee as no noise is injected in their training process.

{\small
\begin{table}

\centering
\begin{tabular}{|c|c|c|}
\hline
Perturbation 													& Convex & 	$\mu$-S.C.							      \\		

\hline
\begin{tabular}{@{}c@{}}Objective \\ \small{\cite{chaudhuri2011differentially}}\end{tabular}							& $\frac{\sqrt{p}}{n\epsilon}$  & 		$\frac{p}{\mu n^{2}\epsilon^{2}}$		\\	

\hline
\begin{tabular}{@{}c@{}}Output \\  \small{\cite{zhang2017efficient}} \end{tabular}
            															& $(\frac{\sqrt{\beta p}}{n\epsilon})^{2/3}$      & $\frac{\beta p}{\mu^{2} n^{2}\epsilon^{2}}$							   \\

\hline
\begin{tabular}{@{}c@{}} Gradient (SGD) \\  \small{\cite{bassily2014differentially}} \end{tabular}			& $\frac{\sqrt{p}\log^{3/2}(n)}{n\epsilon}$  &	$\frac{p\log^{2}\left(n\right)}{\mu n^{2}\epsilon^{2}}$				\\

\hline
\begin{tabular}{@{}c@{}} Gradient (GD) \\  \small{\cite{jayaraman2018distributed}} \end{tabular}
 		&  N/A   & 		$	\frac{\beta p \log^{2}(n)}{\mu^{2} n^{2}\epsilon^{2}}$						 			\\

\hline
DP-GD (Ours)   	   																									& $\frac{\sqrt{p}}{n\epsilon}\wedge \frac{\beta p \log(n)}{ \nu^{2}n^{2}\epsilon^{2}}$   & $\leftarrow$   \\
\hline	
DP-SGD (Ours)																										& $\frac{\sqrt{p}\log(n)}{n\epsilon} \wedge \frac{ p \log(n)}{ \nu n^{2}\epsilon^{2}}$  &  $\leftarrow$	\\
\hline	
\end{tabular}
\caption{Expected excess empirical risk bounds (the smaller, the better) under $(\epsilon, \delta)$-DP. Symbol $\leftarrow$ denotes the bound is same as the one on the left. Notations: $n$ and $p$ are  the number of samples and the number of parameters,  respectively, and $\beta, \mu$ and $\nu$ are the smooth coefficient, the strongly convex coefficient and the  \emph{expected curvature}, respectively,  and  $\nu\geq\mu$ (see Section~\ref{sec:main}). All bounds should be read as $\mathcal{O}(\cdot)$. Operator $\wedge$ selects the smaller element.  The Lipschitz constant $L$ is assumed to be $1$. We omit   $\log\left(1/\delta\right)$ for simplicity.}
\label{tb:utility}
\end{table}
}

\section{Preliminary}
\label{sec:pre}

We introduce notations and definitions in this section.  We use $D=\{d_{1},\ldots,d_{n}\}$ to denote  a dataset with $n$ records. Each $d_{i}=\{\vs_{i}, t_{i}\}$ contains a feature vector $\vs_{i}$  and a target $t_{i}$.  The objective function $F(\vx;D)$ is defined as $F(\vx;D)\delequal\frac{1}{n}\sum_{i=1}^{n}f(\vx;d_{i})$, where $f(\vx;d_{i}):\mathbb{R}^{p}\rightarrow \mathbb{R}$ is the loss of model $\vx\in\mathbb{R}^{p}$ for the record $d_{i}$.

For simplicity, we use $F(\vx)$ to denote $F(\vx;D)$ and $f_{i}(\vx)$ to denote $f(\vx; d_{i})$. We use $\norm{\vv}$ to denote the $l_{2}$ norm of a vector $\vv$. We use $\mathcal{X}_{f}^{*}=\argmin_{\vx\in\mathbb{R}^{p}}f(\vx)$ to denote the set of optimal solutions of $f(\vx)$. Throughout this paper, we assume $\mathcal{X}_{f}^{*}$  non-empty. % and $\vx_{prj}$ is attainable. Definition~\ref{def:properties} shows some assumptions on $F(\vx)$.

\begin{definition}[Objective properties]%[$L$-lipschitz,  $\mu$-strongly convex and $\beta$-smooth]
\label{def:properties}
For any $\vx, \vy\in\mathbb{R}^{p}$ , a function $f:\mathbb{R}^{p}\rightarrow \mathbb{R}$
\begin{itemize}
	\item is $L$-Lipschitz if $|f(\vx)-f(\vy)|\leq L\norm{\vx-\vy}$.
	\item is $\beta$-smooth if $\idot{\nabla f(\vx)-\nabla f(\vy), \vx-\vy}\leq\beta\norm{\vx-\vy}^{2}$.
	\item is  convex if $\idot{\nabla f(\vx)-\nabla f(\vy), \vx-\vy}\geq 0$.
	\item is $\mu$-strongly convex (or $\mu$-S.C.) if $\idot{\nabla f(\vx)-\nabla f(\vy), \vx-\vy}\geq\mu\norm{\vx-\vy}^{2}$.
\end{itemize}
\end{definition}

The strong convexity coefficient $\mu$ is the lower bound of the minimum curvature of function $f$ over the domain. %General 

We say that two datasets $D,D'$ are neighboring datasets (denoted as $D\sim D^{'}$) if $D$ can be obtained by arbitrarily modifying one record in $D'$ (or vice versa).
In this paper we consider the $(\epsilon,\delta)$-differential privacy.
\begin{definition}[$(\epsilon,\delta)$-DP  \cite{dwork2006our}]
A randomized mechanism $\mathcal{M}:D\rightarrow \mathcal{R}$  guarantees $(\epsilon,\delta)$-differential privacy if for any two neighboring input datasets $D,D^{'}$ and for any subset of outputs $S\subseteq \mathcal{R}$ it holds that $\text{Pr}[\mathcal{M}(D)\in S]\leq e^{\epsilon}\text{Pr}[\mathcal{M}(D^{'})\in S]+\delta$.
\end{definition}
The parameter $\delta$ can be viewed as the probability that original $\epsilon$-DP \cite{dwork2006our} fails and a meaningful setting requires $\delta\ll \frac{1}{n}$. By its definition, differential privacy controls the maximum influence that any individual record can produce. Smaller $\epsilon, \delta$ implies less information leak but usually leads to worse utility. One can adjust $\epsilon,\delta$ to trade off between privacy and utility.

DP-ERM requires the output $\vx_{out}\in \mathbb{R}^{p}$ is differentially private with respect to the input dataset $D$.  Let $\vx_{*}\in\mathcal{X}_{F}^{*}$ be one of the optimal solutions of $F(\vx)$, the utility of DP-ERM algorithm is measured by  \emph{expected excess empirical risk}:  $\mathbb{E}[F(\vx_{out})-F(\vx_{*})]$, where the expectation is taken over the  algorithm randomness. Following previous work, we use utility guarantee to denote the upper bound on expected risk. The algorithm with smaller utility guarantee is better.

\section{Main Results}
\vspace{-1mm}
\label{sec:mainmain}

In this section,  we first define the \emph{expected curvature} and discuss its property. 
We then use such expected curvature to improve the utility analysis for  DP-SGD and DP-GD.

\subsection{Expected Curvature}
\label{sec:main}

{
\begin{table*}
\centering
\begin{tabular}{ccccc}
\hline
Dataset&Graduate Admission&Sonar&Boston Housing&Swedish Motor\\
\hline
Minimum &$6.68\times 10^{-3}$&$3.63\times 10^{-4}$&$3.44\times 10^{-3}$&$9.07\times 10^{-6}$\\
\hline
Average &0.40&0.19&0.34&0.24\\
\hline
\end{tabular}
    \caption{ Minimum and average curvatures for linear regression with four datasets. The minimum and the average curvatures are the smallest and the average eigenvalues of the Hessian matrix, respectively. We normalize the features with standard min-max normalization for all datasets. }
    \label{tbl:lr-curvatures}
\end{table*}
}

In the non-private setting, the convergence analysis of gradient descent for strongly convex objective explicitly relies on the strongly convex coefficient $\mu$, i.e., the minimum curvature of the objective function over the domain, which can be extremely small for some objectives. Previous work on DP-ERM uses the same analysis as in the non-private case and therefore the resulting utility bounds rely on the minimum curvature. In our analysis, however, we study the interaction between privacy noise and the optimization procedure, and figure out that the optimization procedure is determined by certain \emph{expected curvature} rather than the usual minimum curvature. Definition~\ref{main} describes such \emph{expected curvature} when the perturbation noise is Gaussian. We use $\vx_{*}=\argmin_{\vx\in\mathcal{X}_{*}}{\norm{\vx-\vx_{1}}}$ to denote the closest solution to the initial point.

\begin{definition}[Expected curvature] 
\label{main}
A convex function $F: \sR^p \rightarrow \sR$, has \emph{expected curvature}   $\nu$ with respect to noise $\mathcal{N}(0,\sigma^{2}\mI_{p})$ if for any   $\vx\in \mathbb{R}^{p}$ and $\tilde{\vx}=\vx-\vz$ where $\vz\sim\mathcal{N}(0,\sigma^{2}\mI_{p})$, it holds that
\begin{equation}
\begin{aligned}
\label{eq:main-0}
\E[\idot{\nabla F(\tilde{\vx}), \tilde{\vx}-\vx_{*}}]\geq \nu\mathbb{E}[\norm{\tilde{\vx}-\vx_{*}}^{2}],
\end{aligned}
\end{equation}
where the expectation is taken with respect to $\vz$.
\end{definition}

\begin{remark}
If $F(\vx)$ is $\mu$-strongly convex, then $F(\vx)$ has $\nu$ expected curvature with $\nu\ge \mu$.
\end{remark}

It can be verified that $\nu=\mu$ always holds because of the strongly convex definition. 

In fact, $\nu$ is usually much larger than $\mu$.
Let $\mH_{\vx}=\nabla^{2}F(\vx)$ be the Hessian matrix evaluated at $\vx$. We use Taylor expansion  to approximate the left hand side of Eq~(\ref{eq:main-0}) as follows
\begin{flalign}
\E[\idot{\nabla F(\tilde{\vx}), \tilde{\vx}-\vx_{*}}] &\approx \mathbb{E}[\idot{\nabla F(\vx)-\mH_{\vx}\vz, \vx-\vz-\vx_{*}}] \label{eq:nuleft}\\
& = \idot{\nabla F(\vx), \vx-\vx_{*}} + \mathbb{E}[\vz^TH_{\vx}\vz] \nn \\
& = \idot{\nabla F(\vx), \vx-\vx_{*}} +\sigma^{2}\tr(\mH_{\vx}).\nn
\end{flalign}
The approximation is reasonably accurate for smooth convex objectives. For convex objective, the Hessian matrix is  positive semi-definite and $\tr(\mH_{\vx})$ is the sum of the eigenvalues of $H_{\vx}$. The right hand side of Eq~(\ref{eq:main-0}) can be written as
\begin{flalign}
\nu\mathbb{E}[\norm{\tilde{\vx}-\vx_{*}}^{2}]&=\nu\mathbb{E}[\norm{\vx-\vz-\vx_{*}}^{2}]\\&=\nu\left(\norm{\vx-\vx_{*}}^{2} + p\sigma^{2}\right).
\end{flalign}
Based on the above approximation, we can estimate the value of $\nu$ in Definition~\ref{main}: 
\begin{equation}
\begin{aligned}
\label{eq:eval_nu}
\nu\approx\frac{\tr(\mH_{\vx})\sigma^{2}+\idot{\nabla F(\vx), \vx-\vx_{*}}}{p\sigma^{2}+\norm{\vx-\vx_{*}}^{2}}.
\end{aligned}
\end{equation}
 We note that $\idot{\nabla F(\vx), \vx-\vx_{*}}\ge \mu \norm{\vx-\vx_{*}}^{2}$ because of the definition of strongly convex definition\footnote{In fact, this is the restricted strongly convex with respect to the minimizer $\vx_*$.}. For bounded $\norm{\vx-\vx_{*}}$ and relatively large $\sigma^{2}$, this implies $\nu\approx\frac{\tr(\mH_{\vx})}{p}$, the average curvature at $\vx$.  Large variance is a reasonable setting because meaningful differential privacy guarantee requires non-trivial amount of noise.

The above analysis suggests that $\nu$ can be independent of and much larger than $\mu$.  We next show that for linear regression, $\nu$ is  larger than $\mu$ without approximation.
\begin{example}
\label{clm:clm1}
Suppose $F(\vx)=\frac{1}{2n}\sum_{i}\norm{\vx\vs_{i}-t_{i}}^{2}$ is the linear regression objective and $\vx\in\mathcal{C}$, where $\mathcal{C}$ is bounded convex set. Let $\beta, \mu$ be the smooth and the strongly convex coefficients, respectively, and $\beta>\mu$.  Then the $\nu>\mu$ for any $\sigma>0$. 
\end{example}

\begin{proof} 
Let $\mS=[\vs_{1},\ldots,\vs_{n}]^T\in \mathbb{R}^{n\times p}$ be the feature matrix. For linear regression, the Hessian matrix of $F(\vx)$ for any $\vx$ is $\mH=\nabla^{2}F(\vx)=(1/n)\mS^{T}\mS$. The approximation in \eqref{eq:nuleft} becomes equality because higher order derivatives are $0$ for linear regression objective. 

Thus, we have Eq~(\ref{eq:main-0}) hold for $\nu=\frac{\tr(\mH)\sigma^{2}+\idot{\nabla F(\vx), \vx-\vx_{*}}}{p\sigma^{2}+\norm{\vx-\vx_{*}}^{2}}$. Because $\mu$ is the smallest eigenvalue of $\mH$, we have $\tr(\mH)>p\mu$ and therefore $\nu>\mu$.
\end{proof}

In practice, the Hessian matrix of linear regression could be ill-conditioned (the minimum curvature is extremely small). Table~\ref{tbl:lr-curvatures} shows the minimum and the average curvatures of four real world regression datasets collected from the Kaggle website\footnote{\url{https://www.kaggle.com/}}.  We further plot the expected curvatures with varying $\sigma$ in Figure~\ref{fig:curvatures_varying_sigma}. We use $\vx=0^{p}$ and Eq~(\ref{eq:eval_nu}) to compute the expected curvature. The expected curvature is almost as large as the average curvature when $\sigma=1.0$.

{
\begin{figure}
\begin{center}
  \includegraphics[width=.6\linewidth]{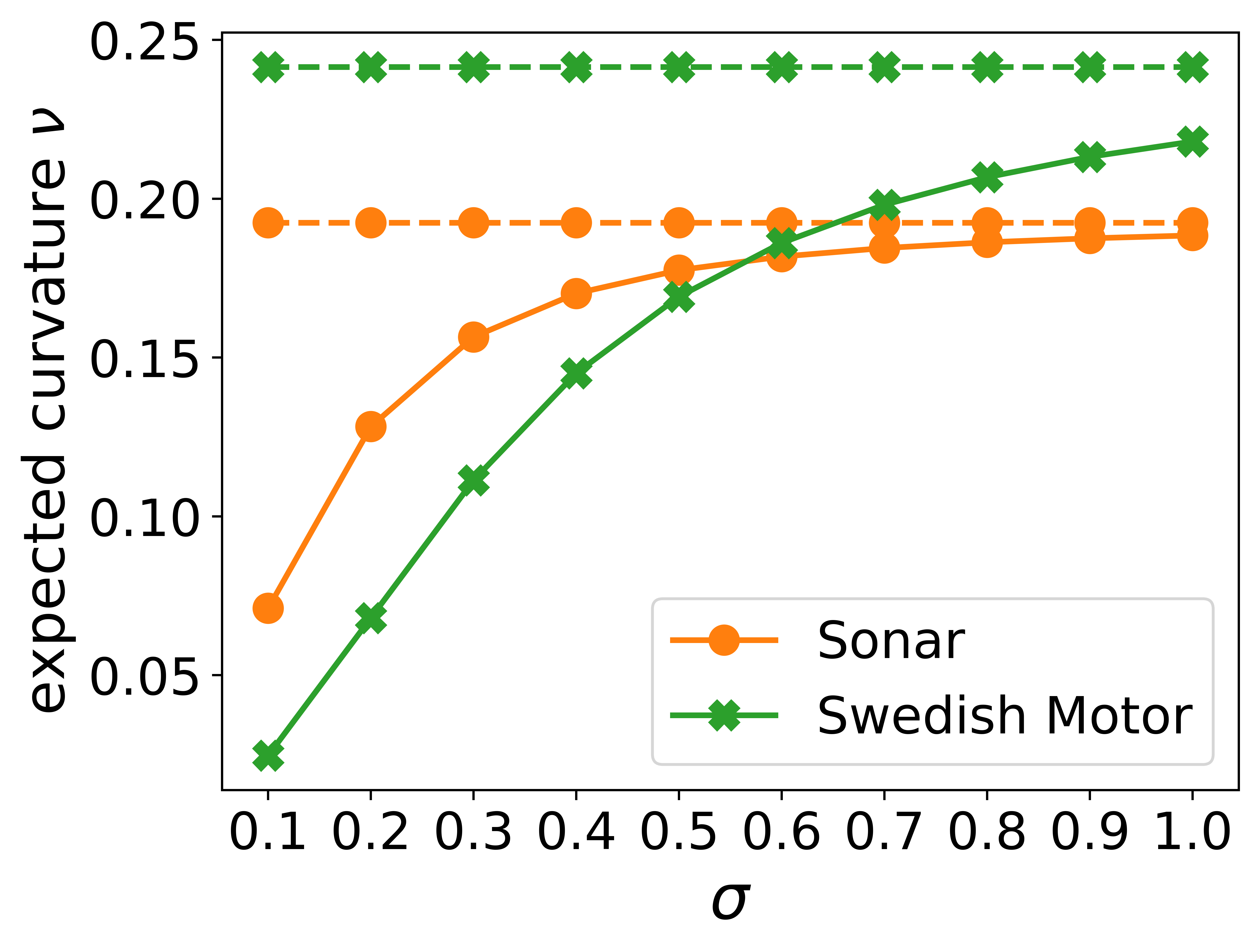}
  \caption{ Expected curvatures $\nu$ with varying $\sigma$ of two regression datasets. The dash line is the average curvature. The solid line is the expected curvature $\nu$. We compute $\nu$ using Eq~(\ref{eq:eval_nu}) with $\vx=0$.}
  \label{fig:curvatures_varying_sigma}
 \end{center}
\end{figure}
}

Another example is the $l_2$ regularized logistic regression. The objective is strongly convex only due to the $l_2$ regularizer. Thus, the minimum curvature (strongly convex coefficient) is the regularization coefficient $\lambda$. \cite{shamir2014communication} shows an optimal choice of $\lambda$ is $\Theta(n^{-1/2})$. In practice, typical choice of $\lambda$ is even smaller and could be on the order of $n^{-1}$.   Therefore removing the dependence on minimum curvature is a significant improvement. 

%Figure~\ref{fig:nu} compares the minimum and average curvatures of regularized logistic regression during the training process. The average curvature is basically unaffected by the regularization  term $\lambda$. In contrast, the minimum curvature reaches $\lambda$ in first few steps.

Another important class of objectives are the restricted strongly convex functions, which naturally arise in the under-determined systems. Although $\mu=0$ for such case, $\nu$ is strictly positive and is at least as large as the restricted strongly convex coefficient. Our theory can easily handle the restricted strongly convex function class.%  \textbf{We note that $\mu=0$ does not 

% \begin{figure}
% \centering
%   \includegraphics[width=0.6\linewidth]{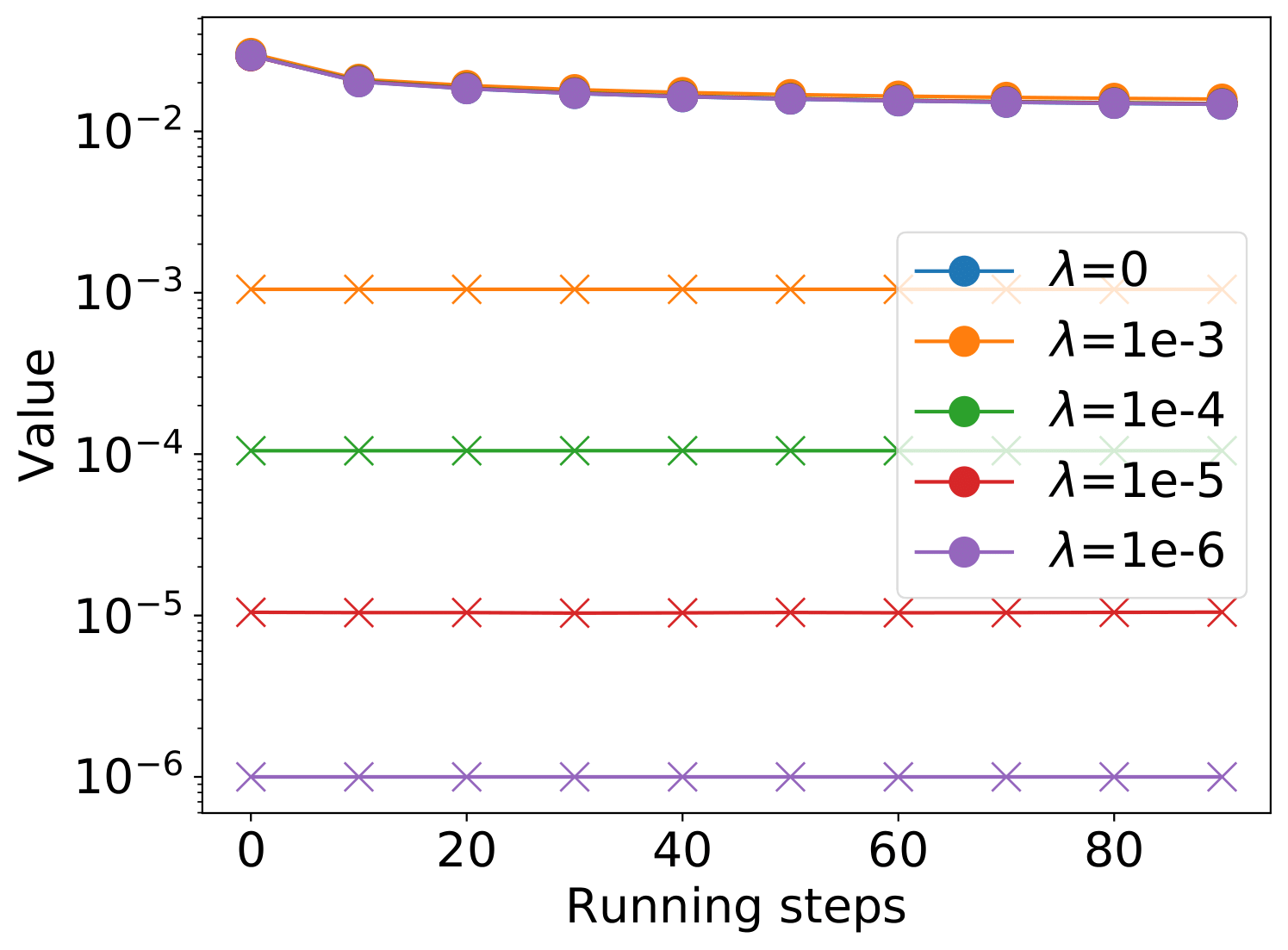}
%   \caption{\small Curvatures of regularized logistic regression on Adult dataset over training. See Section~\ref{sec:exp} for more info of Adult dataset. Dot/cross symbol represents average/minimum curvature respectively.}
%   \label{fig:nu}
% \end{figure}

\subsection{New Utility Guarantee for DP-GD}% Based on Expected Curvature}
\label{sec:main-2}
%\vspace{-1mm}

Algorithm~\ref{alg:DP-GD} describes the DP-GD algorithm (gradient perturbation). It has been shown that for the chosen $\sigma$, Algorithm~\ref{alg:DP-GD} is $(\epsilon, \delta)$-DP \cite{jayaraman2018distributed}.   We next present a new improved utility guarantee of DP-GD by using the \emph{expected curvature}.

%In this section we improve the utility bound of DP-GD (Algorithm~\ref{alg:DP-GD}) by using the \emph{expected curvature}. 
%For smooth objective, GD has linear convergence rate therefore needs much fewer iterations to achieve the same utility as SGD. %Tough the computation cost of each iteration becomes higher, it now gives more possibility for parallelism. We do not cover non-smooth setting for DP-GD because full gradient has no clear advantage compared to stochastic gradient if without smooth assumption. 

\begin{algorithm}[tb]
\caption{DP-GD}
   \label{alg:DP-GD}
\begin{algorithmic}[1]
   \STATE {\bfseries Input:} running steps $T$; learning rate $\eta$. 
   %Loss function $F(\vx)$ with Lipschitz constant $L$.  
   \STATE Set $\sigma=\frac{2\sqrt{2}L\sqrt{T\log(1/\delta)}}{n\epsilon}$.
   \FOR{$t=1$ {\bfseries to} $T$}
			\STATE Compute $\vg_{t} = \nabla F\left(\vx_{t}\right)$.
			%\STATE Set $\sigma=\frac{2\sqrt{2}L\sqrt{T\log\left(1/\delta\right)}}{n\epsilon}$.
		   \STATE Update parameter $\vx_{t+1}=\vx_{t}-\eta\left(\vg_{t}+\vz_{t}\right)$, where $\vz_{t}\sim\mathcal{N}\left(0,\sigma^{2}I_{p}\right)$.
   \ENDFOR
\end{algorithmic}
\end{algorithm}

\begin{theorem} 
\label{thm:DP-GD}
Suppose $F$ is $L$-Lipschitz and $\beta$-smooth with $\nu$ expected curvature. 

If $\nu>0$,  letting $\eta\leq\frac{1}{\beta}$ and $T=\frac{2\log\left(n\right)}{\eta\nu}$, then we have
\[\mathbb{E}[F\left(\vx_{T+1}\right)-F\left(\vx_{*}\right)]=\mathcal{O}\left(\frac{\beta p\log\left(n\right) L^{2}\log\left(1/\delta\right)}{\nu^{2} n^{2}\epsilon^{2}}\right).\]

If $\nu=0$, letting $\eta=\frac{1}{\beta}$,  $T=\frac{n\beta\epsilon}{\sqrt{p}}$ and $\bar \vx=\frac{1}{T}\sum_{i=1}^{T}\vx_{i+1}$, then we have
\[\mathbb{E}[F\left(\bar \vx\right)-F\left(\vx_{*}\right)]=\mathcal{O}\left(\frac{\sqrt{p}L^{2}\log\left(1/\delta\right)}{n\epsilon}\right).\]
The expectations are taken over the algorithm randomness.
\end{theorem}
\begin{proof}
We first prove the case of $\nu>0$. Since $\vx_{t}$ contains Gaussian perturbation noise $\vz_{t-1}$, Definition~\ref{main} gives us
\[\mathbb{E}_{\vz_{t-1}}[\idot{\vx_{t}-\vx_{*}, \nabla F\left(\vx_{t}\right)}]\geq\nu\mathbb{E}_{\vz_{t-1}}[\norm{\vx_{t}-\vx_{*}}^{2}].\]

By the $\beta$-smooth condition, we have
\begin{equation}
\begin{aligned}
\label{eq:rt-0}
&\mathbb{E}_{\vz_{t-1}}[\idot{\vx_{t}-\vx_{*}, \nabla F\left(\vx_{t}\right)}] \\
&\geq \frac{\nu}{2}\mathbb{E}_{\vz_{t-1}}[\norm{\vx_{t}-\vx_{*}}^{2}] + \frac{1}{2\beta}\mathbb{E}_{\vz_{t-1}}[\norm{\nabla F\left(\vx_{t}\right)}^{2}].
\end{aligned}
\end{equation}
Let $r_{t}=\norm{\vx_{t}-\vx_{*}}$ be the estimation error at step $t$. The update rule gives us
\begin{equation}
\begin{aligned}
\label{eq:rt-1}
&r_{t+1}^{2}=\norm{\vx_{t}-\eta \nabla F\left(\vx_{t}\right)-\eta \vz_{t}-\vx_{*} }^{2}.
\end{aligned}
\end{equation}
Taking expectation of  Eq~(\ref{eq:rt-1}) with respect to $\vz_{t}, \vz_{t-1}$ and using Eq~(\ref{eq:rt-0}), we have
\begin{equation}
\begin{aligned}
\label{eq:rt-rsc-smooth}
&\mathbb{E}_{\vz_{t},\vz_{t-1}}[r_{t+1}^{2}]\leq\left(1-\eta \nu\right)\mathbb{E}_{\vz_{t-1}}[r_{t}^{2}]\\
& \;+\left(\eta^{2}-\frac{\eta}{\beta}\right)\mathbb{E}_{\vz_{t-1}}[\norm{\nabla F\left(\vx_{t}\right)}^{2}] + p\eta^{2}\sigma
^{2}.
\end{aligned}
\end{equation}
Choose  $\eta\leq\frac{1}{\beta}$.  Applying Eq~(\ref{eq:rt-rsc-smooth})  and taking expectation with respect to $\vz_t, \vz_{t-1}, \cdots, \vz_1$ iteratively  yields
\begin{equation}
\begin{aligned}
\label{eq:rt-5}
&\mathbb{E}[r_{t+1}^{2}]\leq \left(1-\eta\nu\right)^{t}r_{1}^{2}+ p\eta^{2}\sum_{i=1}^{t}\left(1-\eta\nu\right)^{t-i}\sigma_{i}^{2}.
\end{aligned}
\end{equation}
Then plug  $t=2\log(n)/\eta\nu$, $\sigma=L\sqrt{T\log(1/\delta)/n\epsilon}$  in Eq~(\ref{eq:rt-5}).  We obtain the desired bound by using the smoothness condition to convert Eq~(\ref{eq:rt-5}) to the objective function value bound. 

%Replace $\vy, \vx$ with $\vx_{t+1}, \vx_{t}$, 

We now prove the case of $\nu=0$. The  $\beta$-smooth condition gives us
\[F(\vx_{t+1})\leq F(\vx_{t})+\idot{\nabla F(\vx_{t}), \vx_{t+1}-\vx_{t}}+\frac{\beta}{2}\norm{\vx_{t+1}-\vx_{t}}^{2}.\]
%Substitute $\vx_{t+1}-\vx_{t}=-\eta(\nabla F(\vx_{t})+\vz_{t})$, s
Set $\eta=\frac{1}{\beta}$ and take expectation with respect to $\vz_{t}$.
%\begin{equation}
%\begin{aligned}
%\label{eq:dpgdc-3}
\[\mathbb{E}_{\vz_{t}}[F(\vx_{t+1})] = F(\vx_{t}) - \frac{1}{2}\norm{\nabla F(\vx_{t})}^{2}/\beta  + \frac{1}{2}p\sigma^{2}/\beta.\]
%\end{aligned}
%\end{equation}
%&\leq \idot{\nabla F(\vx_{t}), \vx_{t}-\vx_{*}}  - \frac{1}{2\beta}\norm{\nabla F(\vx_{t})}^{2}  + \frac{1}{2\beta}p\sigma^{2}.
Subtract $F(\vx_{*})$ from both sides and use convexity.
\begin{equation}
\begin{aligned}
\label{eq:dpgdc-4}
&\mathbb{E}_{\vz_{t}}[F(\vx_{t+1})-F(\vx_{*})]\\&\leq \idot{\nabla F(\vx_{t}), \vx_{t}-\vx_{*}}  - \frac{1}{2\beta}\norm{\nabla F(\vx_{t})}^{2}  + \frac{1}{2\beta}p\sigma^{2}.
\end{aligned}
\end{equation}
Plug in $\nabla F(\vx_{t})=\beta(\vx_{t}-\vx_{t+1})-\vz_{t}$ and rearrange. 
\begin{equation}
\begin{aligned}
\label{eq:dpgdc-5}
&\mathbb{E}_{\vz_{t}}[F(\vx_{t+1}) - F(\vx_{*})] \\
&\leq \frac{\beta}{2}(\norm{\vx_{t}-\vx_{*}}^{2} - \norm{\vx_{t+1}-\vx_{*}}^{2}) + \frac{1}{\beta}p\sigma^{2}.
\end{aligned}
\end{equation}
Sum over $t=1,\ldots,T$,  take expectation with respect to $\vz_{1},\ldots,\vz_{T}$ and use the convexity. 
\begin{equation}
\begin{aligned}
\label{eq:dpgdc-6}
\mathbb{E}[F(\bar \vx) - F(\vx_{*})] &\leq \frac{\beta}{2T}\norm{\vx_{1}-\vx_{*}}^{2} +  \frac{1}{\beta}p\sigma^{2}.
\end{aligned}
\end{equation}
Plugging in $T$ and $\sigma$ yields the desired bound.
\end{proof}

\begin{remark}

Theorem~\ref{thm:DP-GD} only  depends  on the expected curvature $\nu$ over the training path.
\end{remark}
 Theorem~\ref{thm:DP-GD}  significantly improves the original analysis of DP-GD because of our argument in Section~\ref{sec:main}. We note that if $\nu=0$, then the curvatures are flatten in all directions. One example is the dot product function, which is used by \cite{bassily2014differentially} to derive their utility lower bound. Such simple function may not be commonly used as loss function in practice. 
\subsection{New Utility Guarantee for DP-SGD}% Based on Expected Curvature}
\label{sec:DP-SGD}
%\vspace{-1mm}

Stochastic gradient descent has become one of the most popular optimization methods because of the cheap one-iteration cost. In this section we describe the DP-SGD (Algorithm~\ref{alg:DP-SGD}) and show that \emph{expected curvature} can also improve the utility analysis for DP-SGD. We note that $\nabla f(\vx)$ represents an element from the subgradient set evaluated at $\vx$ when the objective is not smooth. Before stating our theorems, we introduce the \emph{moments accountant} technique (Lemma~\ref{lem:privacy DP-SGD}) that is essential to establish  privacy guarantee.

\begin{lemma}[\cite{abadi2016deep}]
\label{lem:privacy DP-SGD}
There exist constants $c_1$ and $c_2$ so that given running steps $T$, for any $\epsilon<c_{1}T/n^2$, Algorithm~\ref{alg:DP-SGD} is $\left(\epsilon,\delta\right)$-differentially private for any $\delta>0$ if we choose $\sigma\geq c_2\frac{\sqrt{Tlog\left(1/\delta\right)}}{n\epsilon}$.
\end{lemma}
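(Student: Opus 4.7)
The plan is to invoke the \emph{moments accountant} framework. For a randomized mechanism $\mathcal{M}$ and neighboring datasets $D \sim D'$, define the privacy loss at output $o$ by $c(o) = \log\!\frac{\Pr[\mathcal{M}(D)=o]}{\Pr[\mathcal{M}(D')=o]}$, and the log-moment generating function $\alpha_{\mathcal{M}}(\lambda) = \sup_{D \sim D'} \log \mathbb{E}_{o \sim \mathcal{M}(D)}\bigl[e^{\lambda c(o)}\bigr]$. Two generic facts will drive the argument: (i) composition, $\alpha_{\mathcal{M}_1 \circ \cdots \circ \mathcal{M}_T}(\lambda) \le \sum_{t=1}^T \alpha_{\mathcal{M}_t}(\lambda)$, which follows because the privacy loss of an adaptive composition is a sum of per-step losses; and (ii) a tail conversion stating that $\alpha(\lambda) \le \alpha^{*}$ implies $(\epsilon,\delta)$-DP for any $\delta \ge \exp(\alpha^{*} - \lambda\epsilon)$, obtained by applying Markov's inequality to $\exp(\lambda c)$.

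The first technical step is to bound $\alpha_{\mathcal{M}_t}(\lambda)$ for a single DP-SGD iteration. Each step selects a minibatch with per-record inclusion probability $q = \Theta(1/n)$, sums the $L$-Lipschitz per-sample gradients (so the sum has per-record sensitivity at most $L$), and adds isotropic Gaussian noise $\mathcal{N}(0, \sigma^2 L^2 I_p)$. For add/remove neighbors, the output distribution is effectively the two-component Gaussian mixture $(1-q)\mathcal{N}(\mu_0, \sigma^2 L^2 I_p) + q\,\mathcal{N}(\mu_0 + L\mathbf{e}, \sigma^2 L^2 I_p)$ versus the pure Gaussian $\mathcal{N}(\mu_0, \sigma^2 L^2 I_p)$. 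The key calculation is to control the $\lambda$-th moment of this mixture against the pure Gaussian: expanding the likelihood ratio in powers of $q$ and bounding the remainder via sub-Gaussian moment estimates yields $\alpha_{\mathcal{M}_t}(\lambda) \le C\, q^2\lambda(\lambda+1)/\sigma^2$ in the regime $\lambda \le \sigma^2 \log\!\bigl(1/(q\sigma)\bigr)$. This mixture-moment estimate is the main obstacle; the binomial-style expansion must be truncated delicately and the odd-order cross terms must be shown to vanish while even-order terms are bounded via standard Gaussian integral identities.

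Finally I combine the pieces. Composition over $T$ steps gives $\alpha_{\text{total}}(\lambda) \le C\, T q^2 \lambda(\lambda+1)/\sigma^2 = \tilde{C}\, T\lambda^2/(n^2\sigma^2)$. The tail conversion then yields $(\epsilon,\delta)$-DP whenever $\tilde{C}\,T\lambda^2/(n^2\sigma^2) \le \lambda\epsilon - \log(1/\delta)$ holds for some admissible $\lambda$. Choosing $\lambda = \Theta(\log(1/\delta)/\epsilon)$ and solving for $\sigma$ produces the claimed bound $\sigma \ge c_2 \sqrt{T\log(1/\delta)}/(n\epsilon)$. The admissibility constraint $\lambda \le \sigma^2 \log(1/(q\sigma))$, after substituting this $\sigma$, translates into the upper bound $\epsilon \le c_1 T/n^2$ on the privacy parameter, completing the statement.
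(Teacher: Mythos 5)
Your proposal is correct and is essentially the standard moments-accountant argument of \cite{abadi2016deep}, which is exactly what the paper relies on: the paper states this lemma as a cited result and offers no proof of its own. Your outline (per-step moment bound $\alpha(\lambda)\lesssim q^2\lambda(\lambda+1)/\sigma^2$ for the subsampled Gaussian with $q=\Theta(1/n)$, linear composition of log-moments over $T$ steps, Markov tail conversion, and the choice $\lambda=\Theta(\log(1/\delta)/\epsilon)$ yielding both the noise calibration and the admissibility condition $\epsilon\lesssim T/n^2$) reproduces the source's proof faithfully, with the one caveat that the mixture-moment estimate you correctly flag as the main obstacle is described rather than carried out.
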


\begin{algorithm}[tb]
	\caption{DP-SGD}
	\label{alg:DP-SGD}
	%\SetKwInOut{Input}{Input}
	%\SetKwInOut{Output}{Output}
	%\textbf{Input}:{}
\begin{algorithmic}[1]
	
	%Dataset $D=\{d_{1},\ldots,d_{n}\}$. Individual loss function: $f_{i}\left(\vx\right)=f\left(\vx;d_{i}\right)$ with Lipschitz constant $L$.
	\STATE {\bfseries Input:} running steps $T$; learning rate $\eta_t$.
	\FOR{$t=1$ {\bfseries to} $T$} 
		\STATE Sample $i_{t}$ from $[n]$ uniformly.

		\STATE Compute $\vg_t=\nabla f_{i_{t}}\left(\vx_{t}\right)$.

		\STATE Update parameter $\vx_{t+1}=\vx_{t}-\eta_{t}\left(\vg_{t}+\vz_{t}\right)$, where $\vz_{t}\sim\mathcal{N}\left(0,L^{2}\sigma^{2}I_{p}\right)$.
    \ENDFOR
	
\end{algorithmic}
\end{algorithm}

We next establish the utility guarantee of Algorithm~\ref{alg:DP-SGD} in Theorem~\ref{thm:DP-SGD}.

\begin{theorem}
\label{thm:DP-SGD} 
Suppose $F$ is $L$-Lipschitz with $\nu$ expected curvature. Choose $\sigma$ as stated in Lemma~\ref{lem:privacy DP-SGD}. 

If $\nu>0$, letting $\eta_{t}=\frac{1}{\nu t}$ and $T= n^{2}\epsilon^{2}$, then we have 
\[\mathbb{E}[F\left(\vx_{T}\right)-F\left(\vx_{*}\right)]=\mathcal{O}\left(\frac{pL^{2}\log\left(n\right)\log\left(1/\delta\right)}{ n^{2}\epsilon^{2}\nu}\right).\]

If $\nu=0$, letting $G=L\sqrt{1+p\sigma^{2}}$,  $\eta_{t}=\frac{D}{G\sqrt{t}}$ and $T=n^{2}\epsilon^{2}$, then we have 
\[\mathbb{E}[F\left(\vx_{T}\right)-F\left(\vx_{*}\right)]=\mathcal{O}\left(\frac{\sqrt{p\log\left(1/\delta\right)}L\log\left(n\right)}{n \epsilon}\right).\]
\end{theorem}

\begin{proof}
For the case of $\nu>0$, we first show that $r_{t}:=\norm{\vx_{t}-\vx_{*}}$  satisfies 
\begin{equation}
\begin{aligned}
\label{eq:lma1-1}
\mathbb{E}[r_{t}^{2}]\leq\frac{2L^{2}\left(1+p\sigma^{2}\right)}{t\nu^{2}},
\end{aligned}
\end{equation}
which depends on $\nu$ rather than  $\mu$. Using the update rule to express out $r_{t+1}$ and taking expectation with respect to $\vz_{t}$ and $i_{t}$, we have
{\begin{equation}
\begin{aligned}
\label{eq:lma1-2}
\mathbb{E}_{\vz_{t},i_{t}}[r_{t+1}^{2}]&\leq r_{t}^{2}-2\eta_{t}\idot{\vx_{t}-\vx_{*},\nabla F\left(\vx_{t}\right)}\\
&+\eta_{t}^{2}L^{2}+p\eta_{t}^{2}L^{2}\sigma^{2}.
\end{aligned}
\end{equation}}
Taking expectation to $\vz_{t-1}$ and using Definition~\ref{main}, we have
\begin{equation}
\begin{aligned}
\label{eq:lma1-3}
\mathbb{E}[r_{t+1}^{2}]&\leq\left(1-2\nu\eta_{t}\right)\mathbb{E}[r_{t}^{2}]+\eta_{t}^{2}L^{2}\left(1+p\sigma^{2}\right).
\end{aligned}
\end{equation}
Now we use induction to conduct the proof. Plugging $\eta_{t}=\frac{1}{t\nu}$ into Eq~(\ref{eq:lma1-3}), we have Eq~(\ref{eq:lma1-1}) hold for $t=2$. Assuming that $\mathbb{E}[r_{t}^{2}]\leq\frac{2L^{2}\left(1+p\sigma^{2}\right)}{t\nu^{2}}$ holds for $t>2$, then 
{\begin{equation}
\begin{aligned}
\label{eq:lma1-4}
\mathbb{E}[r_{t+1}^{2}]&\leq\left(1-\frac{2}{t}\right)\mathbb{E}[r_{t}^{2}]+\frac{L^{2}\left(1+p\sigma^{2}\right)}{\nu^{2}t^{2}}\\
&\leq\left(\frac{2}{t}-\frac{3}{t^{2}}\right)\frac{L^{2}\left(1+p\sigma^{2}\right)}{\nu^{2}} \leq \frac{2L^{2}\left(1+p\sigma^{2}\right)}{\left(t+1\right)\nu^{2}}.
\end{aligned}
\end{equation}}
We next use the results in \cite{shamir2013stochastic} to derive the final utility bound. \citeauthor{shamir2013stochastic} show how to  convert the convergence in $r_{t}$  into the convergence  in objective function value without any smooth assumption. The utility guarantee for the case of $\nu>0$ can be derived by plugging Eq~(\ref{eq:lma1-1}) and $T=n^{2}\epsilon^{2}$ into the Theorem 1 of \cite{shamir2013stochastic}. 

For the case of $\nu=0$, the proof is similar to the $\mu=0$ case in  \cite{bassily2014differentially}. We improve the bound by a $\log(n)$ factor because we use the advanced \emph{moments accountant} technique.

\end{proof}

% \begin{remark}
% Theorem~\ref{thm:DP-SGD} does not require smooth assumption.
% \end{remark}

Theorem~\ref{thm:DP-SGD} shows the utility guarantee of DP-SGD depends on $\nu$ rather than $\mu$. We set  $T=\Theta(n^{2})$ following \cite{bassily2014differentially}. We note that $T=\Theta(n^{2})$ is necessary even for non-private SGD to reach  $1/n^{2}$ precision. If we choose $T=\frac{n\epsilon}{\sqrt{p}}$ in the proof of Theorem~\ref{thm:DP-SGD}, then the utility bound for $\nu>0$ is
\[\mathbb{E}[F\left(\vx_{T}\right)-F\left(\vx_{*}\right)]=\mathcal{O}\left(\frac{\sqrt{p}L^{2}\log(n)}{n\epsilon\nu}\right).\]

In contrast, the analysis of \cite{bassily2014differentially} yields 
$\mathbb{E}[F\left(\vx_{T}\right)-F\left(\vx_{*}\right)]=\mathcal{O}\left(\frac{\sqrt{p}L^{2}\log^{2}(n)}{n\epsilon\mu}\right)$ if  setting $T= \frac{n\epsilon}{\sqrt{p}}$, which still depends on the minimum curvature.

\subsection{Discussion}

%Take the sensitivity of gradient descent for example, the gradient  of two neighboring datasets only differ in one record at each update.
In this section, we briefly discuss two other perturbation approaches and compare them to the gradient perturbation approach.

\emph{Output perturbation} \cite{wu2017bolt,zhang2017efficient} first runs the learning algorithm the same as in the non-private case and then adds data-independent noise to the output parameter according to sensitivity of the whole learning procedure.  The magnitude of perturbation noise is propositional to the maximum influence  one record can cause on the learned model over the whole learning procedure. The sensitivity is determined by the Lipschitz coefficient (to control the maximum expansion of two different records), and the smooth and the strongly convex coefficients (to measure the contraction effect of gradient update of the ``good'' records). 

\emph{Objective perturbation} \cite{chaudhuri2011differentially,kifer2012private,iyengar2019towards} perturbs the objective (i.e., the empirical loss)  with a random linear term and then releases the minimizer of the perturbed objective. The strong convexity is necessary to ensure the minimizer is unique\footnote{It adds $L_{2}$ regularization to obtain strong convexity if the original objective is not strongly convex.}.   The sensitivity of objective perturbation is the maximum change of the minimizer that one record can produce. \cite{chaudhuri2011differentially}  uses the largest and the smallest eigenvalue (i.e. the smooth and strongly convex coefficient) of the objective's Hessian matrix to upper bound such change.   

In comparison, \emph{Gradient perturbation} is more flexible than the output/objective perturbations. For gradient perturbation, the sensitivity is only determined by the Lipschitz coefficient which is easy to obtain by using the gradient clipping technique. More critically, we always have $\nu=\mu$ when the gradient update does not involve perturbation noise. Therefore, gradient perturbation is the only  method can leverage such effect of noise among three existing perturbation methods.

%\vspace{-1mm}
\section{Experiment}
\label{sec:exp}

In this section, we evaluate the performance of DP-GD and DP-SGD on multiple real world datasets.  Objective functions are \emph{logistic regression} and \emph{softmax regression} for binary and multi-class datasets, respectively.

\paragraph{Datasets.} We present the results of four benchmark datasets in \cite{iyengar2019towards}, including one multi-class dataset (MNIST) and two with high dimensional features (Real-sim, RCV1)\footnote{We conduct experiments on all 7 datasets in \cite{iyengar2019towards} and present 4 of them due to limited space.}.   Detailed description of datasets can be found in Table~\ref{tbl:4-datasets}. We use the same pre-processing as in \cite{iyengar2019code}. We use $80\%$ data for training and the rest for testing, the same as \cite{iyengar2019towards}.

\begin{table} 

\centering
\begin{TAB}(r)[0.5pt]{|c|c|c|c|c|}{|c|c|c|c|}

dataset						&\text{Adult}		& MNIST	  & \text{Real-sim} & RCV1	\\		
\# records				    & 45220    		& 65000   						&72309			&50000  				\\	
\# features					&	104				&	784			&20958			&47236				\\
\# classes					&2			&10			&2	&2 \\
\end{TAB}
\caption{Detailed description of four real world datasets.}
\label{tbl:4-datasets}
\end{table}

\begin{figure}
\centering
  \includegraphics[width=0.9\linewidth]{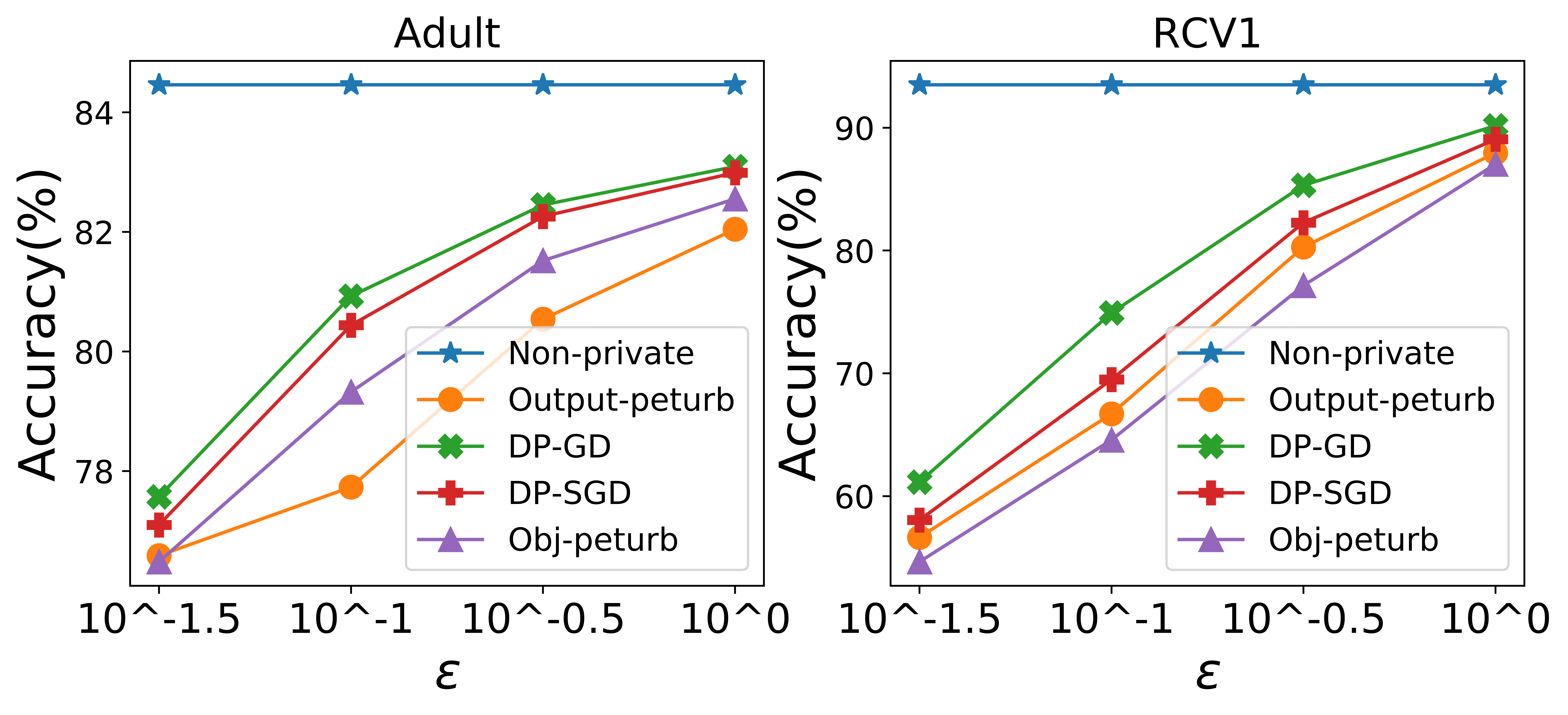}
  \caption{Algorithm validation accuracy (in \%) with varying $\epsilon$. NP represents non-private baseline.}
  \label{fig:res_7_sets}
  \centering
\end{figure}

\paragraph{Implementation details.}  We track \emph{Rényi differentialy privacy (RDP) \cite{mironov2017renyi}} and convert it to $(\epsilon, \delta)$-DP. Running step $T$ is chosen from $\{50, 200, 800\}$ for both DP-GD and DP-SGD. The standard deviation of the added noise $\sigma$ is set to be the smallest value such that the privacy budget is allowable to run desired steps.  We ensure each loss function is Lipschitz by clipping individual gradient. The method in  \cite{goodfellow2015efficient} allows us to clip individual gradient efficiently.  Clipping threshold is set as $1$ ($0.5$ for high dimensional datasets because of the sparse gradient).  Privacy parameter $\delta$ is set as $\frac{1}{n^{2}}$. The $l_{2}$ regularization coefficient is set as $1\times10^{-4}$. For DP-GD, learning rate is chosen from $\{0.1, 1.0, 5.0\}$ ($\{0.2, 2.0, 10.0\}$ for high dimensional datasets). For DP-SGD, we use moments accountant to track the privacy loss and the sampling ratio is set as $0.1$ (roughly the mini-batch size is $0.1\times$dataset size).  The learning rate of DP-SGD is twice as large as DP-GD and it is divided by $2$ at the middle of training.  All reported numbers are averaged over 20 runs.  

\paragraph{Baseline algorithms.}  
The baseline algorithms include state-of-the-art objective and output perturbation algorithms. 
For the objective perturbation, we use \emph{Approximate Minima Perturbation (AMP)} \cite{iyengar2019towards}.  For the output perturbation, we use the algorithm in  \cite{wu2017bolt} (Output perturbation SGD). We adopt the implementation and hyperparameters in  \cite{iyengar2019towards} for both algorithms. For the multi-class classification tasks,  \cite{wu2017bolt} and \cite{iyengar2019towards} divide the privacy budget evenly and train multiple binary classifiers because their algorithms need to compute smooth coefficient before training and therefore are   not directly applicable to softmax regression.

\paragraph{Experiment results.} The validation accuracy results for all evaluated algorithms with $\epsilon=0.1$ ($1.0$ for multi-class datasets) are presented in Table~\ref{tb:accuracy}. We also plot the accuracy results with varying $\epsilon$ in Figure~\ref{fig:res_7_sets}.    These results confirm our theory in Section~\ref{sec:mainmain}: gradient perturbation achieves better performance than other perturbation methods as it leverages the average curvature $\nu$. We note that in \cite{iyengar2019towards} the performance of DP-SGD is not as good as in our experiment.  Our setup has two differences from theirs. First, they set the noise variance to be  the upper bound (Algorithm 2 in their Appendix C) in Lemma \ref{lem:privacy DP-SGD}. In contrast, we use smaller noise by carefully estimating the exact privacy loss through numerical integration  as  in \cite{abadi2016deep}.  Second, we decay the learning rate of DP-SGD during training while \cite{iyengar2019towards} use constant learning rate. Learning rate decay is important for both theoretical analysis and empirical performance of DP-SGD. The above differences lead to a significant improvement on the performance of DP-SGD.

\begin{table}
\begin{tabular}{ p{1.5cm}p{1.15cm}p{1.15cm}p{1.35cm} p{1.15cm} }
 \hline
               & Adult  			& MNIST  		& Real-sim 				& RCV1 \\[0.3ex]
 \hline
Non-private   & 84.8 	     & 	91.9			& 	93.3  				& 93.5	\\[0.3ex]
AMP&  79.3	 		 & 71.9 	& 	73.1	 			& 64.5	\\[0.3ex]
Out-SGD  &  77.4	    &69.4			& 73.2	 						& 66.7	\\[0.3ex] 
DP-SGD	   &  	80.4 		&87.5    	& 	73.8	 				   & 70.4	\\[0.3ex]	
DP-GD	   &  	\textbf{80.9} 	  	& \textbf{88.6}   	  			& 	\textbf{76.1}	 	  & 	\textbf{74.9}\\[0.3ex]	
%DP-GD-D 	&	\textbf{98.8}	& \textbf{81.2}& \textbf{88.7}	 		& 	66.2& 	\textbf{68.8}		& \textbf{76.6}		&\textbf{75.2}		\\[0.3ex]
 \hline
\end{tabular}
    \caption{\small Algorithm validation accuracy (in \%) on various kinds of real world datasets. Privacy parameter $\epsilon$ is $0.1$ for binary dataset and $1$ for  multi-classes datasets. }
\label{tb:accuracy}
\end{table}

\section{Conclusion}
\label{sec:conclusion}
In this paper, we show the privacy noise helps optimization analysis, which is used to improve the utility guarantee of both DP-GD and DP-SGD.   Experiments on real world datasets corroborate our theoretical findings nicely. In the future, it is interesting to consider utilizing the expected curvature to improve the utility guarantee of other gradient perturbation based algorithms.

\section*{Acknowledgements}

The authors want to thank the anonymous reviewers for their constructive comments. This work is supported by the National Natural Science Foundation of China (U1711262, U1611264,U1711261,U1811261,U1811264), National Key R\&D Program of China (2018YFB1004404), Guangdong Basic and Applied Basic Research Foundation (2019B1515130001),  Key R\&D Program of Guangdong Province (2018B010107005). Jian Yin and Huishuai Zhang are corresponding authors.

\bibliography{ijcai20}

\begin{thebibliography}{}

\bibitem[\protect\citeauthoryear{Abadi \bgroup \em et al.\egroup
  }{2016}]{abadi2016deep}
Martin Abadi, Andy Chu, Ian Goodfellow, H.~Brendan McMahan, Ilya Mironov, Kunal
  Talwar, and Li~Zhang.
\newblock Deep learning with differential privacy.
\newblock In {\em ACM SIGSAC Conference on Computer and Communications
  Security}, 2016.

\bibitem[\protect\citeauthoryear{Abowd}{2016}]{abowd2016challenge}
John~M. Abowd.
\newblock The challenge of scientific reproducibility and privacy protection
  for statistical agencies.
\newblock {\em Census Scientific Advisory Committee}, 2016.

\bibitem[\protect\citeauthoryear{Agarwal \bgroup \em et al.\egroup
  }{2018}]{agarwal2018cpsgd}
Naman Agarwal, Ananda~Theertha Suresh, Felix Yu, Sanjiv Kumar, and Brendan
  McMahan.
\newblock cpsgd: Communication-efficient and differentially-private distributed
  sgd.
\newblock In {\em Advances in Neural Information Processing Systems}, 2018.

\bibitem[\protect\citeauthoryear{Bassily \bgroup \em et al.\egroup
  }{2014}]{bassily2014differentially}
Raef Bassily, Adam Smith, and Abhradeep Thakurta.
\newblock Differentially private empirical risk minimization: Efficient
  algorithms and tight error bounds.
\newblock {\em Annual Symposium on Foundations of Computer Science}, 2014.

\bibitem[\protect\citeauthoryear{Chaudhuri \bgroup \em et al.\egroup
  }{2011}]{chaudhuri2011differentially}
Kamalika Chaudhuri, Claire Monteleoni, and Anand~D. Sarwate.
\newblock Differentially private empirical risk minimization.
\newblock {\em Journal of Machine Learning Research}, 2011.

\bibitem[\protect\citeauthoryear{Dwork \bgroup \em et al.\egroup
  }{2006}]{dwork2006our}
Cynthia Dwork, Krishnaram Kenthapadi, Frank McSherry, Ilya Mironov, and Moni
  Naor.
\newblock Our data, ourselves: Privacy via distributed noise generation.
\newblock In {\em Annual International Conference on the Theory and
  Applications of Cryptographic Techniques}, 2006.

\bibitem[\protect\citeauthoryear{Dwork \bgroup \em et al.\egroup
  }{2014}]{algofound}
Cynthia Dwork, Aaron Roth, et~al.
\newblock The algorithmic foundations of differential privacy.
\newblock {\em Foundations and Trends{\textregistered} in Theoretical Computer
  Science}, 2014.

\bibitem[\protect\citeauthoryear{Erlingsson \bgroup \em et al.\egroup
  }{2014}]{erlingsson2014rappor}
{\'U}lfar Erlingsson, Vasyl Pihur, and Aleksandra Korolova.
\newblock Rappor: Randomized aggregatable privacy-preserving ordinal response.
\newblock In {\em Proceedings of the 2014 ACM SIGSAC conference on computer and
  communications security}, 2014.

\bibitem[\protect\citeauthoryear{Feldman \bgroup \em et al.\egroup
  }{2018}]{feldman2018privacy}
Vitaly Feldman, Ilya Mironov, Kunal Talwar, and Abhradeep Thakurta.
\newblock Privacy amplification by iteration.
\newblock In {\em IEEE Annual Symposium on Foundations of Computer Science
  (FOCS)}, 2018.

\bibitem[\protect\citeauthoryear{Goodfellow}{2015}]{goodfellow2015efficient}
Ian Goodfellow.
\newblock Efficient per-example gradient computations.
\newblock {\em arXiv:1510.01799}, 2015.

\bibitem[\protect\citeauthoryear{Hitaj \bgroup \em et al.\egroup
  }{2017}]{hitaj2017deep}
Briland Hitaj, Giuseppe Ateniese, and Fernando P{\'e}rez-Cruz.
\newblock Deep models under the gan: information leakage from collaborative
  deep learning.
\newblock In {\em Proceedings of the 2017 ACM SIGSAC Conference on Computer and
  Communications Security}, 2017.

\bibitem[\protect\citeauthoryear{Iyengar \bgroup \em et al.\egroup
  }{2017}]{iyengar2019code}
Roger Iyengar, Joseph~P. Near, Dawn Song, Om~Thakkar, Abhradeep Thakurta, and
  Lun Wang.
\newblock Differentially private convex optimization benchmark, 2017.

\bibitem[\protect\citeauthoryear{Iyengar \bgroup \em et al.\egroup
  }{2019}]{iyengar2019towards}
Roger Iyengar, Joseph~P. Near, Dawn Song, Om~Thakkar, Abhradeep Thakurta, and
  Lun Wang.
\newblock Towards practical differentially private convex optimization.
\newblock In {\em IEEE Symposium on Security and Privacy}, 2019.

\bibitem[\protect\citeauthoryear{Jayaraman \bgroup \em et al.\egroup
  }{2018}]{jayaraman2018distributed}
Bargav Jayaraman, Lingxiao Wang, David Evans, and Quanquan Gu.
\newblock Distributed learning without distress: Privacy-preserving empirical
  risk minimization.
\newblock In {\em Advances in Neural Information Processing Systems}, 2018.

\bibitem[\protect\citeauthoryear{Kifer \bgroup \em et al.\egroup
  }{2012}]{kifer2012private}
Daniel Kifer, Adam Smith, and Abhradeep Thakurta.
\newblock Private convex empirical risk minimization and high-dimensional
  regression.
\newblock In {\em Conference on Learning Theory}, 2012.

\bibitem[\protect\citeauthoryear{Kone{\v{c}}n{\`y} \bgroup \em et al.\egroup
  }{2015}]{konevcny2015federated}
Jakub Kone{\v{c}}n{\`y}, Brendan McMahan, and Daniel Ramage.
\newblock Federated optimization: Distributed optimization beyond the
  datacenter.
\newblock {\em arXiv preprint arXiv:1511.03575}, 2015.

\bibitem[\protect\citeauthoryear{Lee and Kifer}{2018}]{lee2018concentrated}
Jaewoo Lee and Daniel Kifer.
\newblock Concentrated differentially private gradient descent with adaptive
  per-iteration privacy budget.
\newblock In {\em Proceedings of the 24th ACM SIGKDD International Conference
  on Knowledge Discovery \& Data Mining}, 2018.

\bibitem[\protect\citeauthoryear{Mironov}{2017}]{mironov2017renyi}
Ilya Mironov.
\newblock R{\'e}nyi differential privacy.
\newblock In {\em IEEE Computer Security Foundations Symposium}, 2017.

\bibitem[\protect\citeauthoryear{Rahman \bgroup \em et al.\egroup
  }{2018}]{rahman2018membership}
Md~Atiqur Rahman, Tanzila Rahman, Robert Laganiere, Noman Mohammed, and Yang
  Wang.
\newblock Membership inference attack against differentially private deep
  learning model.
\newblock {\em Transactions on Data Privacy}, 2018.

\bibitem[\protect\citeauthoryear{Shamir and Zhang}{2013}]{shamir2013stochastic}
Ohad Shamir and Tong Zhang.
\newblock Stochastic gradient descent for non-smooth optimization: Convergence
  results and optimal averaging schemes.
\newblock In {\em International Conference on Machine Learning}, 2013.

\bibitem[\protect\citeauthoryear{Shamir \bgroup \em et al.\egroup
  }{2014}]{shamir2014communication}
Ohad Shamir, Nati Srebro, and Tong Zhang.
\newblock Communication-efficient distributed optimization using an approximate
  newton-type method.
\newblock In {\em International conference on machine learning}, 2014.

\bibitem[\protect\citeauthoryear{Shokri \bgroup \em et al.\egroup
  }{2017}]{shokri2017membership}
Reza Shokri, Marco Stronati, Congzheng Song, and Vitaly Shmatikov.
\newblock Membership inference attacks against machine learning models.
\newblock In {\em IEEE Symposium on Security and Privacy (SP)}, 2017.

\bibitem[\protect\citeauthoryear{Talwar \bgroup \em et al.\egroup
  }{2015}]{talwar2015nearly}
Kunal Talwar, Abhradeep~Guha Thakurta, and Li~Zhang.
\newblock Nearly optimal private lasso.
\newblock In {\em Advances in Neural Information Processing Systems}, 2015.

\bibitem[\protect\citeauthoryear{Wang and Gu}{2019}]{wang2019differentially}
Lingxiao Wang and Quanquan Gu.
\newblock Differentially private iterative gradient hard thresholding for
  sparse learning.
\newblock In {\em International Joint Conference on Artificial Intelligence},
  2019.

\bibitem[\protect\citeauthoryear{Wang \bgroup \em et al.\egroup
  }{2017}]{wang2017differentially}
Di~Wang, Minwei Ye, and Jinhui Xu.
\newblock Differentially private empirical risk minimization revisited: Faster
  and more general.
\newblock In {\em Advances in Neural Information Processing Systems}, 2017.

\bibitem[\protect\citeauthoryear{Wu \bgroup \em et al.\egroup
  }{2017}]{wu2017bolt}
Xi~Wu, Fengan Li, Arun Kumar, Kamalika Chaudhuri, Somesh Jha, and Jeffrey
  Naughton.
\newblock Bolt-on differential privacy for scalable stochastic gradient
  descent-based analytics.
\newblock In {\em ACM International Conference on Management of Data}, 2017.

\bibitem[\protect\citeauthoryear{Zhang \bgroup \em et al.\egroup
  }{2017}]{zhang2017efficient}
Jiaqi Zhang, Kai Zheng, Wenlong Mou, and Liwei Wang.
\newblock Efficient private erm for smooth objectives.
\newblock In {\em International Joint Conference on Artificial Intelligence},
  2017.

\end{thebibliography}
\bibliographystyle{named}

\clearpage

\onecolumn
\end{document}